\newtheorem{definition}{Definition}
\newtheorem{assumption}{Assumption}
\newtheorem{claim}{Claim}
\newtheorem{corollary}{Corollary}
\newtheorem{theorem}{Theorem}
\newtheorem{proposition}{Proposition}
\newcommand{\R}{\mathbb{R}}
\newcommand{\E}{\mathbb{E}}
\newcommand{\cE}{\mathcal{E}}
\newcommand{\X}{\mathcal{X}}
\newcommand{\Y}{\mathcal{Y}}
\newcommand{\V}{\mathcal{V}}
\newcommand{\bx}{\boldsymbol{x}}
\newcommand{\by}{\boldsymbol{y}}
\newcommand{\bomega}{\boldsymbol{\omega}}
\newcommand{\bu}{\boldsymbol{u}}
\newcommand{\bv}{\boldsymbol{v}}
\newcommand{\bw}{\boldsymbol{w}}
\newcommand{\bphi}{\boldsymbol{\phi}}
\newcommand{\cH}{\mathcal{H}}
\newcommand{\hf}{\hat{f}}
\newcommand{\tf}{\tilde{f}}
\newcommand{\tk}{\tilde{k}}
\newcommand{\balpha}{\boldsymbol{\alpha}}
\newcommand{\hL}{\hat{L}}
\newcommand{\talpha}{\tilde{\alpha}}
\newcommand{\softmin}{\mathit{softmin}}
\title{D2KE: From Distance to Kernel and Embedding}
\author{
   Lingfei Wu \thanks{Both authors contributed equally to this work}\\
   IBM Research\\
   Yorktown Heights, NY 10598\\
   \texttt{wuli@us.ibm.com} \\
   \And
   Ian En-Hsu Yen {\small{$^\ast$}}\\
  Carnegie Mellon University\\
  Pittsburgh, PA 15213 \\
  \texttt{eyan@cs.cmu.edu} \\
  \And
   Fangli Xu \\
  College of William and Mary\\
  Williamsburg, VA 23185 \\
  \texttt{fxu02@email.wm.edu} \\
  \AND
   Pradeep Ravikumar \\
  Carnegie Mellon University\\
  Pittsburgh, PA 15213 \\
  \texttt{pradeepr@cs.cmu.edu} \\
   \And
   Michael Witbrock\\
   IBM Research\\
   Yorktown Heights, NY 10598\\
   \texttt{witbrock@us.ibm.com} \\
}
\begin{document}

\maketitle

\begin{abstract}
For many machine learning problem settings, particularly with structured inputs such as sequences or sets of objects, a distance measure between inputs can be specified more naturally than a feature representation. However, most standard machine models are designed for inputs with a vector feature representation. In this work, we consider the estimation of a function $f:\mathcal{X} \rightarrow \R$ based solely on a dissimilarity measure $d:\mathcal{X}\times\mathcal{X} \rightarrow \R$ between inputs. In particular, we propose a general framework to derive a family of \emph{positive definite kernels} from a given dissimilarity measure, which subsumes the widely-used \emph{representative-set method} as a special case, and relates to the well-known \emph{distance substitution kernel} in a limiting case. We show that functions in the corresponding Reproducing Kernel Hilbert Space (RKHS) are Lipschitz-continuous w.r.t. the given distance metric. We provide a tractable algorithm to estimate a function from this RKHS, and show that it enjoys better generalizability than Nearest-Neighbor estimates. Our approach draws from the literature of Random Features, but instead of deriving feature maps from an existing kernel, we construct novel kernels from a random feature map, that we specify given the distance measure. We conduct classification experiments with such disparate domains as strings, time series, and sets of vectors, where our proposed framework compares favorably to existing distance-based learning methods such as $k$-nearest-neighbors, distance-substitution kernels, pseudo-Euclidean embedding, and the representative-set method.
\end{abstract}

\section{Introduction}

In many problem domains, it is easier to specify a reasonable dissimilarity (or similarity) function between instances, than to construct a feature representation. This is particularly the case with structured inputs, such as sets, sequences, or networks of objects, where it is typically less than clear how to construct the representation of the whole structured input, even when given a good feature representation of each individual object. On the other hand, even for complex structured inputs, there are many well-developed dissimilarity measures, such as the Edit Distance (Levenshtein distance) between sequences, Dynamic Time Warping measure between time series, Hausdorff distance between sets, and Wasserstein distance between distributions. 

However, standard machine learning methods are designed for vector representations, and classically there has been far less work on distance-based methods for either classification or regression. The most common distance-based method is Nearest-Neighbor Estimation (NNE), which predicts the outcome for an instance using an average of its nearest neighbors in the input space, with nearness measured by the given dissimilarity measure. Estimation from nearest neighbors, however, is unreliable, specifically having high variance when the neighbors are far apart, which is typically the case when the intrinsic dimension implied by the distance is large.

To address this issue, a line of research has focused on developing global distance-based (or similarity-based) machine learning methods \cite{pkkalska2005dissimilarity,duin2012dissimilarity,balcan2008theory,cortes2012algorithms}, in large part by drawing upon connections to kernel methods \cite{scholkopf1999input} or directly learning with similarity functions \cite{balcan2008theory,cortes2012algorithms,balcan2008discriminative}; we refer the reader in particular to the survey in \cite{chen2009similarity}. Among these, the most direct approach treats the data similarity matrix (or a transformed dissimilarity matrix) as a kernel Gram matrix, and then uses standard kernel-based methods such as Support Vector Machines (SVM) or kernel ridge regression with this Gram matrix. A key caveat with this approach however is that most similarity (or dissimilarity) measures do not provide a \emph{positive-definite (PD)} kernel, so that the empirical risk minimization problem is not well-defined, and moreover becomes non-convex~\cite{ong2004learning,lin2003study}. 

A line of work has therefore focused on estimating a positive-definite (PD) Gram matrix that merely approximates the similarity matrix. This could be achieved for instance by clipping, or flipping, or shifting eigenvalues of the similarity matrix \cite{pekalska2001generalized}, or explicitly learning a PD approximation of the similarity matrix~\cite{chen2008training,chen2009learning}. Equivalently, one could also find a Euclidean embedding (also known as  dissimilarity representation) approximating the dissimilarity matrix as in Multidimensional Scaling~\cite{pekalska2001generalized,pkkalska2005dissimilarity,pekalska2006dissimilarity,pekalska2008beyond,duin2012dissimilarity}
Such modifications of the similarity matrix however often leads to a loss of information, and moreover, the enforced PD property is typically guaranteed to hold only on the training data, resulting in an inconsistency between the set of test and training samples~\cite{chen2009similarity} 
\cite{haasdonk2004learning,scholkopf2001kernel} provide conditions under which one can obtain a PD kernel through simple transformations of the distance measure, but which are not satisfied for many commonly used dissimilarity measures such as Dynamic Time Warping, Hausdorff distance and Earth Mover's distance~\cite{haasdonk2004learning}.

Another common approach is to select a subset of training samples as a held-out representative set, and use distances or similarities to points in the set as the feature function~\cite{graepel1999classification,pekalska2001generalized}. As we show, with proper scaling, this approach can be interpreted as a special instance of our framework. On the other hand, our framework provides a more general and richer family of kernels, many of which significantly outperform the representative-set method in a variety of application domains.

In this paper, we propose a general framework that constructs a family of PD kernels from a dissimilarity measure. The kernel satisfies the property that functions in the corresponding Reproducing Kernel Hilbert Space (RKHS) are Lipschitz-continuous w.r.t. the given distance measure. We also provide a tractable estimator for a function from this RKHS which enjoys much better generalization properties than nearest-neighbor estimation~\cite{chen2009similarity}. Our approach draws from the literature of Random Features \cite{rahimi2008random}, but instead of deriving feature maps from an existing kernel, we derive novel kernels from a random feature map specifically designed given the distance measure. Our framework produces a feature embedding  and consequently a vector representation of each instance that can be employed by any classification and regression models. In classification experiments in disparate domains as strings, time series, and sets of vectors, our proposed framework compares favorably to existing distance-based algorithms such as $k$-nearest-neighbors, distance-substitution kernels, pseudo-Euclidean embedding, and representative-set method.

\section{Problem Setup}

We consider the estimation of a target function $f:\X \rightarrow \R$ from a collection of samples $\{(\bx_i,y_i)\}_{i=1}^n$, where $\bx_i\in\X$ is the input object, and $y_i\in\Y$ is the output observation associated with the target function $f(\bx_i)$. For instance, in a regression problem, $y_i\sim f(\bx_i)+\omega_i \in \R$ for some random noise $\omega_i$, and in binary classification, we have $y_i\in\{0,1\}$ with $P(y_i=1|\bx_i)=f(\bx_i)$. We are given a dissimilarity measure $d:\X\times \X\rightarrow \R$ between input objects instead of a feature representation of $\bx$. For some of the analyses, we would require the dissimilarity measure to be a \emph{metric} as follows.

\begin{assumption}[Distance Metric]\label{assume:metric}
$d:\X\times \X\rightarrow \R$ is a distance metric, that is, it satisfies (i) $d(\bx_1,\bx_2)\geq 0$, (ii) $d(\bx_1,\bx_2)=0 \Leftarrow \Rightarrow \bx_1=\bx_2$, (iii) $d(\bx_1,\bx_2)=d(\bx_2,\bx_1)$, and (iv) $d(\bx_1,\bx_2)\leq d(\bx_1,\bx_3) + d(\bx_3,\bx_2)$.
\end{assumption}

\subsection{Function Continuity and Space Covering}

An ideal feature representation for the learning task is (i) compact and (ii) such that the target function $f(\bx)$ is a simple (e.g. linear) function of the resulting representation. Similarly, an ideal dissimilarity measure $d(\bx_1,\bx_2)$ for learning a target function $f(\bx)$ should satisfy certain properties. On the one hand, a small dissimilarity $d(\bx_1,\bx_2)$ between two objects should imply small difference in the function values $|f(\bx_1)-f(\bx_2)|$. On the other hand, we want a small expected distance among samples, so that the data lies in a compact space of small intrinsic dimension. We next build up some definitions to formalize these properties. 

\begin{assumption}[Lipschitz Continuity]\label{assume:lip}
For any $\bx_1, \bx_2 \in \X$, there exists some constant $L>0$ such that
\begin{equation}\label{lips}
|f(\bx_1)-f(\bx_2)| \leq L \, d(\bx_1,\bx_2),
\end{equation}
\end{assumption}

We would prefer the target function to have a small Lipschitz-continuity constant $L$ with respect to the dissimilarity measure $d(.,.)$. Such Lipschitz-continuity alone however might not suffice. For example, one can simply set $d(\bx_1,\bx_2)=\infty$ for any $\bx_1\neq \bx_2$ to satisfy Eq. \eqref{lips}. We thus need the following quantity that measures the size of the space implied by a given dissimilarity measure.

\begin{definition}[Covering Number]\label{def:cover}
Assuming $d$ is a \emph{metric}. A $\delta$-cover of $\X$ w.r.t. $d(.,.)$ is a set $\cE$ s.t. 
$$
\forall \bx\in\X, \exists \bx_i\in\cE, d(\bx,\bx_i)\leq \delta.
$$ 
Then the covering number $N(\delta; \X,d)$ is the size of the smallest $\delta$-cover for $\X$with respect to $d$.
\end{definition}

Assuming the input domain $\X$ is compact, the covering number $N(\delta; \X,d)$ measures its size w.r.t. the distance measure $d$. We show how the two quantities defined above affect the estimation error of a Nearest-Neighbor Estimator.

\subsection{Effective Dimension and Nearest Neighbor Estimation}

We extend the standard analysis of the estimation error of $k$-nearest-neighbor from finite-dimensional vector spaces to any input space $\X$, with an associated distance measure $d$, and a finite covering number $N(\delta; \X,d)$, by defining the \emph{effective dimension} as follows.

\begin{assumption}[Effective Dimension]\label{def:eff_dim}
Let the effective dimension $p_{\X,d}>0$ be the minimum $p$ satisfying
$$
\exists c>0, \forall \delta: 0< \delta < 1, \;\;N(\delta;\X,d)\leq c\left(\frac{1}{\delta}\right)^p.
$$
\end{assumption}

Here we provide an example of effective dimension in case of measuring the space of \emph{Multiset}.


\textbf{Multiset with Hausdorff Distance. } A multiset is a set that allows duplicate elements. Consider two multisets $\bx_1=\{\bu_i\}_{i=1}^M$, $\bx_2=\{\bv_j\}_{j=1}^N$. Let $\Delta(\bu_i,\bv_j)$ be a \emph{ground distance} that measures the distance between two elements $\bu_i,\bv_j\in\V$ in a set. The (modified) \emph{Hausdorff Distance} \cite{dubuisson1994modified} can be defined as $d(\bx_1,\bx_2):=$
\begin{equation}\label{HD}
\max\{\frac{1}{N}\sum_{i=1}^N \min_{j\in[M]}\Delta(\bu_i,\bv_j),\frac{1}{M}\sum_{j=1}^M\min_{i\in[N]}\Delta(\bv_j,\bu_i)\}
\end{equation}
Let $N(\delta;\V,\Delta)$ be the covering number of $\V$ under the ground distance $\Delta$. Let $\X$ denote the set of all sets of size bounded by $L$. By constructing a covering of $\X$ containing any set of size less or equal than $L$ with its elements taken from the covering of $\V$, we have
$
N(\delta;\X,d)\leq  N(\delta;\V;\Delta)^L.
$
Therefore,
$
p_{\X,d}\leq L\log N(\delta;\V,\Delta).
$
For example, if $\V:=\{\bv\in\R^{p}\mid \|\bv\|_2\leq 1\}$ and $\Delta$ is Euclidean distance, we have $N(\delta;\V,\Delta)=(1+\frac{2}{\delta})^p$ and 
$
p_{\X,d}\leq Lp.
$

Equipped with the concept of \emph{effective dimension}, we can obtain the following bound on the estimation error of the $k$-Nearest-Neighbor estimate of $f(\bx)$.

\begin{theorem}\label{thm:nn}
Let $Var(y|f(x))\leq \sigma^2$, and $\hat f_n$ be the $k$-Nearest Neighbor estimate of the target function $f$ constructed from a training set of size $n$. Denote $p:=p_{\X,d}$. We have
$$
\E_{\bx}\biggl[\left(\hat f_n(\bx)-f(\bx)\right)^2\biggr] \leq \frac{\sigma^2}{k} + cL^2\left(\frac{k}{n}\right)^{2/p}
$$
for some constant $c>0$. For $\sigma>0$, minimizing RHS w.r.t. the parameter $k$, we have
\begin{equation}\label{knn_est}
\E_{\bx}\biggl[\left(\hat f_n(\bx)-f(\bx)\right)^2\biggr] \leq c_2\sigma^{\frac{4}{p+2}}L^{\frac{2p}{2+p}} \left(\frac{1}{n}\right)^{\frac{2}{2+p}}
\end{equation}
for some constant $c_2>0$.
\end{theorem}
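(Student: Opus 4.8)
The plan is to follow the classical bias--variance analysis of $k$-nearest-neighbor regression, replacing the Euclidean volume arguments by the covering-number bound afforded by the effective dimension $p = p_{\X,d}$. Write $\hat f_n(\bx) = \frac{1}{k}\sum_{i\in N_k(\bx)} y_i$, where $N_k(\bx)$ indexes the $k$ training inputs closest to $\bx$ under $d$, and let $R_k(\bx):=\max_{i\in N_k(\bx)} d(\bx,\bx_i)$ be the distance to the $k$-th neighbor. First I would condition on the training inputs and the query and split the error into noise and approximation parts, $\hat f_n(\bx)-f(\bx) = \frac{1}{k}\sum_{i\in N_k(\bx)}\omega_i + \frac{1}{k}\sum_{i\in N_k(\bx)}(f(\bx_i)-f(\bx))$. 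Taking the expectation over the independent, mean-zero noise kills the cross term and bounds the first contribution by $\sigma^2/k$. For the second, Assumption~\ref{assume:lip} gives $|f(\bx_i)-f(\bx)|\le L\,d(\bx,\bx_i)\le L\,R_k(\bx)$, so the squared bias is at most $L^2 R_k(\bx)^2$ pointwise. Hence $\E[(\hat f_n(\bx)-f(\bx))^2]\le \sigma^2/k + L^2\,\E_{\bx}[R_k(\bx)^2]$, and the whole problem reduces to controlling the expected squared $k$-NN radius.

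The crux, and the step I expect to be the main obstacle, is the geometric lemma $\E_{\bx}[R_k(\bx)^2]\le c\,(k/n)^{2/p}$. Here I would fix a scale $\delta$, take a minimal $\delta$-cover $\cE$ of size $N(\delta;\X,d)\le c\,\delta^{-p}$ guaranteed by Assumption~\ref{def:eff_dim}, and partition $\X$ into the induced nearest-center cells, each of diameter at most $2\delta$ by the triangle inequality of Assumption~\ref{assume:metric}. If the cell containing $\bx$ holds at least $k$ training points then $R_k(\bx)\le 2\delta$, so the event $R_k(\bx)>2\delta$ forces $\bx$ into a cell holding fewer than $k$ of the $n$ samples. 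Splitting cells by their probability mass and bounding the sample count per cell by a binomial (Chernoff) tail, the cells of mass at least $2k/n$ contribute only $e^{-\Omega(k)}$, while the remaining low-mass cells contribute at most $N(\delta)\cdot 2k/n\le 2c\,k/(n\delta^p)$; this produces the \emph{distribution-free} tail bound $\Pr(R_k(\bx)>2\delta)\le 2c\,k/(n\delta^p)+e^{-\Omega(k)}$, in which the covering number does all the work.

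With this tail in hand I would recover the lemma through the layer-cake identity $\E[R_k^2]=\int_0^\infty 2r\,\Pr(R_k(\bx)>r)\,dr$, using the trivial bound $\Pr\le 1$ below the critical radius $r_0\asymp(k/n)^{1/p}$ at which the tail becomes non-trivial, and the decaying bound above it; the two pieces balance at order $(k/n)^{2/p}$, with the exponential term negligible since $\X$ is compact of bounded diameter. This yields the first displayed inequality. The second display is then routine calculus: differentiating $g(k)=\sigma^2/k + cL^2(k/n)^{2/p}$ places the optimum at $k^\star\asymp(\sigma^2/L^2)^{p/(p+2)}\,n^{2/(p+2)}$, and substituting back produces the stated rate $\sigma^{4/(p+2)}L^{2p/(p+2)}n^{-2/(p+2)}$.
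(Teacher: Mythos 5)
Your proposal follows exactly the route the paper intends: the paper's own ``proof'' of this theorem is a one-line deferral to the standard $k$-NN analysis (G\"yorfi et al.), with the space-partition number replaced by the covering number and the dimension by the effective dimension, and your bias--variance decomposition, the $\sigma^2/k$ noise bound, the Lipschitz bound $|f(\bx_i)-f(\bx)|\leq L\,R_k(\bx)$ on the bias, the covering-number control of the $k$-NN radius, and the final optimization over $k$ are precisely the ingredients that citation refers to. In that sense you are reconstructing the proof the paper omits, and the skeleton is right.

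There is, however, one step that does not hold as you wrote it. Your tail bound is $\Pr\{R_k(\bx)>2\delta\}\leq 2ck/(n\delta^p)+e^{-\Omega(k)}$, and you dismiss the exponential term as ``negligible since $\X$ is compact of bounded diameter.'' Compactness bounds the diameter $D$ but does nothing to shrink $e^{-\Omega(k)}$: after the layer-cake integration this term contributes $L^2D^2e^{-\Omega(k)}$ to the risk, which for fixed $k$ (e.g.\ $k=1$, a case the first display of the theorem covers) is a constant that does not decay with $n$ and is certainly not $\leq cL^2(k/n)^{2/p}$. So your argument establishes the first display only when $k$ grows at least logarithmically in $n$; it does suffice for the second display, since the optimizing $k^\star \asymp (\sigma^2/L^2)^{p/(p+2)}n^{2/(p+2)}$ grows polynomially. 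The standard repair is to avoid the mass-threshold split altogether: keep the expectation over $\bx$ inside and bound
$\E_{\bx}\bigl[\Pr\{R_k(\bx)>2\delta\}\bigr]\leq \sum_{j}\mu(A_j)\,\Pr\{\mathrm{Bin}(n,\mu(A_j))<k\}\leq N(\delta;\X,d)\cdot\sup_{t\in[0,1]} t\,\Pr\{\mathrm{Bin}(n,t)<k\}\leq c\,N(\delta;\X,d)\,k/n$,
where the supremum bound follows by treating $t\leq 2k/n$ trivially and $t>2k/n$ by Chernoff; this has no additive exponential term, and the layer-cake step then gives $\E[R_k^2]\leq c(k/n)^{2/p}$ for every $k\geq 1$. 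One further caveat your calculus glosses over (as does the paper): the layer-cake integral $\int_{r_0}^{D} r^{1-p}\,dr$ is dominated by its lower limit only when $p>2$; for $p\leq 2$ the rate degrades (to $k/n$ up to logarithmic factors), so the stated bound should be read in the regime of moderately large effective dimension, which is the regime the paper cares about anyway.
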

\begin{proof}
The proof is almost the same to a standard analysis of $k$-NN's estimation error in, for example, \cite{gyorfi2006distribution}, with the \emph{space partition number} replaced by the \emph{covering number}, and \emph{dimension} replaced by the \emph{effective dimension} in Defition \ref{def:eff_dim}.
\end{proof}

When $p_{\X,d}$ is reasonably large, the estimation error of $k$-NN decreases quite slowly with $n$. Thus, for the estimation error to be bounded by $\epsilon$, requires the number of samples to scale exponentially in $p_{\X,d}$. In the following sections, we develop an estimator $\hat{f}$ based on a RKHS derived from the distance measure, with a considerably better sample complexity for problems with higher effective dimension.

\section{From Distance to Kernel}
\label{sec:d2ke}
We aim to address the long-standing problem of how to convert a distance measure into a positive-definite kernel. Existing approaches either require strict conditions on the distance function (e.g. that the distance be isometric to the square of the Euclidean distance)~\cite{haasdonk2004learning,scholkopf2001kernel}, or construct empirical PD Gram matrices that do not necessarily generalize to the test samples \cite{pekalska2001generalized}. There are also some approaches specific to some structured inputs such as sequences, such as \cite{collins2002convolution,cuturi2011fast} that modify a distance function over sequences to a kernel by replacing the minimization over possible alignments into a summation over all possible alignments. This type of kernel, however, results in a diagonal-dominance problem, where the diagonal entries of the kernel Gram matrix are orders of magnitude larger than the off-diagonal entries, due to the summation over a huge number of alignments with a sample itself.

Here we introduce a simple but effective approach \emph{D2KE} that constructs a family of \emph{positive-definite} kernels from a given distance measure. Given an input domain $\X$ and a distance measure $d(.,.)$, we construct a family of kernels as
\begin{equation}\label{DKernel}
k(\bx,\by):=\int p(\bomega) \phi_{\bomega}(\bx)\phi_{\bomega}(\by) d\bomega, \text{where}\;\; \phi_{\bomega}(\bx):=\exp(-\gamma d(\bx,\bomega)),
\end{equation}
where $\bomega\in \Omega$ is a random structured object, $p(\bomega)$ is a distribution over $\Omega$, and $\phi_{\bomega}(\bx)$ is a feature map derived from the distance of $\bx$ to all objects $\bomega\in\Omega$. The kernel is parameterized by both $p(\bomega)$ and $\gamma$. 

\textbf{Relationship to Distance Substitution Kernel.} 
An insightful interpretation of the kernel \eqref{DKernel} can be obtained by expressing the kernel \eqref{DKernel} as
\begin{equation}\label{DKernel2}
\exp\left( -\gamma\softmin_{p(\bomega)}\{ d(\bx,\bomega)+d(\bomega,\by) \} \right)
\end{equation}
where the soft minimum function, parameterized by $p(\bomega)$ and $\gamma$, is defined as
\begin{equation}\label{softmin}
\softmin_{p(\bomega)}\;f(\bomega):= -\frac{1}{\gamma}\log \int p(\bomega) e^{-\gamma f(\bomega)} d\bomega.
\end{equation} 
Therefore, the kernel $k(\bx,\by)$ can be interpreted as a soft version of the \emph{distance substitution kernel} \cite{haasdonk2004learning}, where instead of substituting $d(\bx,\by)$ into the exponent, it substitutes a soft version of the form
\begin{equation}\label{soft_dist}
\softmin_{p(\omega)}\{ d(\bx,\bomega)+d(\bomega,\by) \}.
\end{equation}
Note when $\gamma\rightarrow\infty$, the value of \eqref{soft_dist} is determined by $\min_{\bomega\in\Omega}\;  d(\bx,\bomega)+d(\bomega,\by) $, which equals $d(\bx,\by)$ if $\X\subseteq\Omega$, since it cannot be smaller than $d(\bx,\by)$ by the triangle inequality. In other words, when $\X\subseteq \Omega$,
$$
k(\bx,\by) \rightarrow \exp(-\gamma d(\bx,\by)) \;\;\text{as}\;\; \gamma\rightarrow \infty.
$$
On the other hand, unlike the distance-substituion kernel, our kernel in Eq. \eqref{DKernel2} is always PD by construction.

\begin{algorithm}[t]
  \caption{Random Feature Approximation of function in RKHS with the kernel in  \eqref{DKernel}}
 \begin{algorithmic}[1] 
     \vspace{+0.1cm}
     \STATE Draw $R$ samples from $p(\bomega)$ to get $\{\bomega_j\}_{j=1}^R$.
     \vspace{+0.1cm}
     \STATE Set the $R$-dimensional feature embedding as 
     $$
     \hat\bphi_j(\bx)=\frac{1}{\sqrt{R}}\exp(-\gamma d(\bx,\bomega_j) ),\;\forall j\in[R]
     $$
     \STATE Solve the following problem for some $\mu>0$:
     $$
     \hat \bw :=\underset{\bw\in\R^{R}}{argmin}\; \frac{1}{n}\sum_{i=1}^n \ell(\bw^T\hat\bphi(\bx_i),y_i) + \frac{\mu}{2}\|\bw\|^2
     $$
     \STATE Output the estimated function $\tilde f_R(\bx):=\hat\bw^T\hat\bphi(\bx)$.
 \end{algorithmic}
 \label{alg:RF}
 \end{algorithm}

\textbf{Random Feature Approximation.}
The reader might have noticed that the kernel \eqref{DKernel} cannot be evaluated analytically in general. However, this does not prohibit its use in practice, so long as we can approximate it via \emph{Random Features (RF)}~\cite{rahimi2008random}, which in our case is particularly natural as the kernel itself is defined via a random feature map. Thus, our kernel with the RF approximation can not only be used in small problems but also in large-scale settings with a large number of samples, where standard kernel methods with $O(n^2)$ complexity are no longer efficient enough and approximation methods, such as Random Features, must be employed \cite{rahimi2008random,wu2016revisiting,chen2016efficient,kar2012random,bach2017equivalence}. Given the RF approximation, one can then directly learn a target function as a linear function of the RF feature map, by minimizing a domain-specific empirical risk.
It is worth noting that a recent work \cite{sinha2016learning} that learns to select a set of random features by solving an optimization problem in an supervised setting is orthogonal to our D2KE approach and could be extended to develop a supervised D2KE method.  
We outline this overall \emph{RF} based empirical risk minimization for our class of D2KE kernels in Algorithm~\ref{alg:RF}. We will provide a detailed analysis of our estimator in Algorithm~\ref{alg:RF} in Section~\ref{sec:analysis}, and contrast its statistical performance to that of $K$-nearest-neighbor.

\textbf{Relationship to Representative-Set Method.} 
A naive choice of $p(\bomega)$ relates our approach to the \emph{representative-set method (RSM)}: setting $\Omega=\X$, with $p(\bomega)=p(\bx)$. This gives us a kernel \eqref{DKernel} that depends on the data distribution. One can then obtain a  Random-Feature approximation to the kernel in \eqref{DKernel} by holding out a part of the training data $\{\hat\bx_j\}_{j=1}^R$ as samples from $p(\bomega)$, and creating an $R$-dimensional feature embedding of the form:
\begin{equation}\label{feature_embed}
\hat{\phi}_{j}(\bx):=\frac{1}{\sqrt{R}}\exp\left(-\gamma d(\bx,\hat\bx_j)\right),\; j\in[R],
\end{equation}
as in Algorithm \ref{alg:RF}. This is equivalent to a $1/\sqrt{R}$-scaled version of the embedding function in the \emph{representative-set method} (or \emph{similarity-as-features method}) \cite{graepel1999classification,pekalska2001generalized,pkkalska2005dissimilarity,pekalska2006dissimilarity,pekalska2008beyond,chen2009similarity,duin2012dissimilarity}, where one computes each sample's similarity to a set of representatives as its feature representation. However, here by interpreting \eqref{feature_embed} as a random-feature approximation to the kernel \eqref{DKernel}, we obtain a much nicer generalization error bound even in the case $R\rightarrow \infty$. This is in contrast to the analysis of RSM in \cite{chen2009similarity}, where one has to keep the size of the representative set small (of the order $o(n)$) in order to have reasonable generalization performance.

\textbf{Effect of $p(\bomega)$.}
The choice of $p(\bomega)$ plays an important role in our kernel. Surprisingly, we found that many ``close to uniform'' choices of $p(\bomega)$ in a variety of domains give better performance than for instance the choice of the data distribution $p(\bomega)=p(\bx)$ (as in the representative-set method). Here are some examples from our experiments: 
i) In the \emph{time-series} domain with dissimilarity computed via Dynamic Time Warping (DTW), a distribution $p(\bomega)$ corresponding to random time series of length uniform in $\in [2,10]$, and with Gaussian-distributed elements, yields much better performance than the Representative-Set Method (RSM);
ii) In \emph{string} classification, with edit distance, a distribution $p(\bomega)$ corresponding to random strings with elements uniformly drawn from the alphabet $\Sigma$ yields much better performance than RSM; 
iii) When classifying sets of vectors with the Hausdorff distance in Eq. \eqref{HD}, a distribution $p(\bomega)$ corresponding to random sets of size uniform in $\in [3,15]$ with elements drawn uniformly from a unit sphere  yields significantly better performance than RSM.

We conjecture two potential reasons for the better performance of the chosen distributions $p(\bomega)$ in these cases, though a formal theoretical treatment is an interesting subject we defer to future work.
Firstly, as $p(\bomega)$ is synthetic, one can generate unlimited number of random features, which results in a much better approximation to the exact kernel \eqref{DKernel}. In contrast, RSM requires held-out samples from the data, which could be quite limited for a small data set.
Second, in some cases, even with a small or similar number of random features to RSM, the performance of the selected distribution still leads to significantly better results. For those cases we conjecture that the selected $p(\bomega)$ generates objects that capture semantic information more relevant to the estimation of $f(\bx)$, \emph{when coupled} with our feature map under the dissimilarity measure $d(\bx,\bomega)$. 

\section{Analysis}
\label{sec:analysis}

In this section, we analyze the proposed framework from the perspectives of error decomposition. Let $\cH$ be the RKHS corresponding to the kernel \eqref{DKernel}. Let
\begin{equation}\label{risk_minimizer}
f_{C}:=\underset{f\in \cH}{argmin} \E[ \ell( f(\bx), y ) ] \ \ \
s.t.           \|f\|_{\cH}\leq C
\end{equation}
be the population risk minimizer subject to the RKHS norm constraint $\|f\|_{\cH}\leq C$. And let 
\begin{equation}\label{erm_minimizer}
\hf_{n}:=\underset{f\in \cH}{argmin} \frac{1}{n}\sum_{i=1}^n \ell( f(\bx_i), y_i )  \ \ \
s.t.           \|f\|_{\cH}\leq C
\end{equation}
be the corresponding empirical risk minimizer. In addition, let $\tf_R$ be the estimated function from our \emph{random feature approximation} (Algorithm \ref{alg:RF}). Then denote the population and empirical risks as $L(f)$ and $\hL(f)$ respectively. We have the following risk decomposition $L(\tf_R)-L(f)=$
\begin{equation*}
\begin{aligned}
\underbrace{(L(\tf_R)-L(\hf_n))}_{random feature} + \underbrace{(L(\hf_n)-L(f_C))}_{estimation} + \underbrace{(L(f_C)-L(f))}_{approximation}
\end{aligned}
\end{equation*}

In the following, we will discuss the three terms from the rightmost to the leftmost.

\textbf{Function Approximation Error.}
The RKHS implied by the kernel \eqref{DKernel} is 
\begin{equation*}
    \cH:=\left\{f \;\middle|\; f(\bx)=\sum_{j=1}^m \alpha_{j} k(\bx_j,\bx),\; \bx_j\in\X,\forall j\in [m],\; m \in \mathbb{N}\right\},
\end{equation*}
which is a smaller function space than the space of Lipschitz-continuous function w.r.t. the distance $d(\bx_1,\bx_2)$. As we show, any function $f\in\cH$ is Lipschitz-continous w.r.t. the distance $d(.,.)$.
\begin{proposition}\label{thm:RKHS_lip}
Let $\cH$ be the RKHS corresponding to the kernel \eqref{DKernel} derived from some metric $d(.,.)$. For any $f\in\cH$, 
$$
|f(\bx_1)-f(\bx_2)|\leq L_f d(\bx_1,\bx_2)
$$
where $L_f=\gamma C$.
\end{proposition}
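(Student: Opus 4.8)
The plan is to avoid any explicit integral representation of $f$ itself and instead work through the reproducing property, reducing the claim to a bound on the RKHS distance between the two canonical feature maps $k(\bx_1,\cdot)$ and $k(\bx_2,\cdot)$. For any $f\in\cH$ the reproducing property gives $f(\bx_1)-f(\bx_2)=\langle f,\;k(\bx_1,\cdot)-k(\bx_2,\cdot)\rangle_{\cH}$, so Cauchy--Schwarz and the norm constraint $\|f\|_{\cH}\leq C$ yield
$$
|f(\bx_1)-f(\bx_2)|\;\leq\;\|f\|_{\cH}\,\|k(\bx_1,\cdot)-k(\bx_2,\cdot)\|_{\cH}\;\leq\;C\,\|k(\bx_1,\cdot)-k(\bx_2,\cdot)\|_{\cH}.
$$
It therefore suffices to establish $\|k(\bx_1,\cdot)-k(\bx_2,\cdot)\|_{\cH}\leq \gamma\,d(\bx_1,\bx_2)$, after which the bound $L_f=\gamma C$ follows immediately.

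To control this norm I would expand it using the reproducing property once more, $\|k(\bx_1,\cdot)-k(\bx_2,\cdot)\|_{\cH}^2 = k(\bx_1,\bx_1)-2k(\bx_1,\bx_2)+k(\bx_2,\bx_2)$, and then substitute the definition \eqref{DKernel} of the kernel to collect the three terms into a single integral,
$$
\|k(\bx_1,\cdot)-k(\bx_2,\cdot)\|_{\cH}^2 \;=\; \int p(\bomega)\,\bigl(\phi_{\bomega}(\bx_1)-\phi_{\bomega}(\bx_2)\bigr)^2\,d\bomega.
$$
The integrand is then bounded pointwise in $\bomega$: since $t\mapsto e^{-\gamma t}$ has derivative of magnitude at most $\gamma$ on $[0,\infty)$, it is $\gamma$-Lipschitz there, giving $|\phi_{\bomega}(\bx_1)-\phi_{\bomega}(\bx_2)|\leq \gamma\,|d(\bx_1,\bomega)-d(\bx_2,\bomega)|$, and by the reverse triangle inequality afforded by Assumption~\ref{assume:metric} we have $|d(\bx_1,\bomega)-d(\bx_2,\bomega)|\leq d(\bx_1,\bx_2)$. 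Hence the integrand is at most $\gamma^2 d(\bx_1,\bx_2)^2$ uniformly in $\bomega$, and because $p$ is a probability density integrating to one, the whole integral is at most $\gamma^2 d(\bx_1,\bx_2)^2$. Taking square roots and combining with the Cauchy--Schwarz step above gives $|f(\bx_1)-f(\bx_2)|\leq \gamma C\,d(\bx_1,\bx_2)$, as claimed.

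The striking feature of this route is that there is essentially no hard step: the only ingredients are the reproducing property (standard for any RKHS), the definition \eqref{DKernel}, the elementary Lipschitz constant of the exponential, and the metric axioms. The one place where care is warranted is the choice of proof strategy itself. A more literal alternative would write $f(\bx)=\int p(\bomega) w(\bomega)\phi_{\bomega}(\bx)\,d\bomega$ for a weight function $w$ and bound the difference directly; that approach works too, but it forces one to relate $\|f\|_{\cH}$ to $\|w\|_{L^2(p)}$ and then to pass from an $L^1(p)$ to an $L^2(p)$ norm of $w$ via Jensen's inequality, introducing exactly the kind of bookkeeping about minimal-norm representations that the reproducing-property argument sidesteps. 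I would therefore present the reproducing-property version, since it keeps the whole proof to a few lines and makes transparent that metricity of $d$ (Assumption~\ref{assume:metric}) is precisely what licenses the key inequality $|d(\bx_1,\bomega)-d(\bx_2,\bomega)|\leq d(\bx_1,\bx_2)$.
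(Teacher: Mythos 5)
Your proposal is correct and follows essentially the same route as the paper's own proof: the reproducing property plus Cauchy--Schwarz, the identification of $\|k(\bx_1,\cdot)-k(\bx_2,\cdot)\|_{\cH}^2$ with $\int p(\bomega)\bigl(\phi_{\bomega}(\bx_1)-\phi_{\bomega}(\bx_2)\bigr)^2\,d\bomega$, the $\gamma$-Lipschitz bound on $t\mapsto e^{-\gamma t}$, and the reverse triangle inequality from Assumption~\ref{assume:metric}. The only cosmetic difference is that you expand the norm via $k(\bx_1,\bx_1)-2k(\bx_1,\bx_2)+k(\bx_2,\bx_2)$ before collecting it into the integral, whereas the paper writes the integral form directly.
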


While any $f$ in the RKHS is Lipschitz-continuous w.r.t. the given distance $d(.,.)$, we are interested in imposing additional smoothness via the RKHS norm constraint $\|f\|_{\cH}\leq C$, and by the kernel parameter $\gamma$. The hope is that the best function $f_C$ within this class approximates the true function $f$ well in terms of the approximation error
$
L(f_C)-L(f).
$
The stronger assumption made by the RKHS gives us a qualitatively better estimation error, as discussed below.

\textbf{Estimation Error.}
Define $D_{\lambda}$ as
$$
D_{\lambda}:=\sum_{j=1}^{\infty} \frac{1}{1+\lambda/\mu_j}
$$
where $\{\mu_j\}_{j=1}^{\infty}$ is the eigenvalues of the kernel \eqref{DKernel2} and $\lambda$ is a tuning parameter. It holds that for any $\lambda\geq D_{\lambda}/n$, with probability at least $1-\delta$,
$
L(\hf_n)-L(f_C) \leq c(\log\frac{1}{\delta})^2C^2 \lambda
$
for some universal constant $c$ \cite{zhang2005learning}. Here we would like to set $\lambda$ as small as possible (as a function of $n$). By using the following kernel-independent bound:
$
D_{\lambda} \leq 1/\lambda,
$
we have $\lambda=1/\sqrt{n}$ and thus a bound on the estimation error

\begin{equation}\label{est_error}
L(\hf_n)-L(f_C) \leq c(\log\frac{1}{\delta})^2C^2 \sqrt{\frac{1}{n}}.
\end{equation}

The estimation error is quite standard for a RKHS estimator. It has a much better dependency w.r.t. $n$ (i.e. $n^{-1/2}$) compared to that of \emph{$k$-nearest-neighbor method} (i.e. $n^{-2/(2+p_{\X,d})}$) especially for higher effective dimension. A more careful analysis might lead to tighter bound on $D_{\lambda}$ and also a better rate w.r.t. $n$. However, the analysis of $D_{\lambda}$ for our kernel \eqref{DKernel} is much more difficult than that of typical cases as we do not have an analytic form of the kernel. 

\textbf{Random Feature Approximation.}
Denote $\hat L(.)$ as the empirical risk function. The error from RF approximation $L(\tilde f_R)-L(\hat f_n)$ can be further decomposed as 
$$
(L(\tilde f_R)-\hat L(\tilde f_R)) + (\hat L(\tilde f_R)-\hat L(\hat f_n)) + (\hat L(\hat f_n)-L(\hat f_n))
$$
where the first and third terms can be bounded via the same estimation error bound in \eqref{est_error}, as both $\tilde f_R$ and $\hat f_n$ have RKHS norm bounded by $C$. Therefore, in the following, we focus only on the second term of empirical risk. We start by analyzing the approximation error of the kernel
$
\Delta_R(\bx_1,\bx_2)=\tk_R(\bx_1,\bx_2)-k(\bx_1,\bx_2)
$
where
\begin{equation}\label{RF}
\tk_R(\bx_1,\bx_2):=\frac{1}{R}\sum_{j=1}^R \phi_j(\bx_1)\phi_j(\bx_2).
\end{equation}

\begin{proposition}\label{thm:RF}
Let $\Delta_R(\bx_1,\bx_2)=k(\bx_1,\bx_2)-\tk(\bx_1,\bx_2)$, we have uniform convergence of the form
{\small
\begin{equation*}\label{converge_result}
P\left\{ \max_{\bx_1,\bx_2\in\X} |\Delta_R(\bx_1,\bx_2)| > 2t\right\} \leq 2\left(\frac{12\gamma}{t}\right)^{2p_{\X,d}}e^{-Rt^2/2},
\end{equation*}
}
where $p_{\X,d}$ is the effective dimension of $\X$ under metric $d(.,.)$. In other words, to guarantee $|\Delta_R(\bx_1,\bx_2)|\leq \epsilon$ with probability at least $1-\delta$, it suffices to have
$$
R = \Omega\biggl(\frac{p_{\X,d}}{\epsilon^2}\log(\frac{\gamma}{\epsilon})+\frac{1}{\epsilon^2}\log(\frac{1}{\delta}) \biggr).
$$
\end{proposition}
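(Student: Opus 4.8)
The plan is to follow the standard Rahimi--Recht uniform-convergence strategy for random features, but adapted to a general metric space: since $\X$ carries no linear structure, the passage from a finite net to the continuum will be driven by a metric-Lipschitz estimate rather than a Euclidean gradient bound, and the net size will be controlled by the covering number through the effective dimension $p_{\X,d}$. First I would establish pointwise concentration. For any fixed pair $(\bx_1,\bx_2)$, write $\tk_R(\bx_1,\bx_2)=\frac{1}{R}\sum_{j=1}^R Z_j$ with $Z_j:=\phi_{\bomega_j}(\bx_1)\phi_{\bomega_j}(\bx_2)$ i.i.d.\ and $Z_j\in(0,1]$ (each $\phi_{\bomega}\in(0,1]$ because $d\geq 0$), and $\E[Z_j]=k(\bx_1,\bx_2)$ by the definition \eqref{DKernel}. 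Thus $\Delta_R(\bx_1,\bx_2)$ is a centered average of bounded variables, and Hoeffding's inequality gives $P(|\Delta_R(\bx_1,\bx_2)|>t)\leq 2e^{-Rt^2/2}$ for each fixed pair.

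The second ingredient, and the technical heart of the argument, is that $\Delta_R$ is Lipschitz jointly in $(\bx_1,\bx_2)$ with respect to $d$, uniformly in the random draw $\{\bomega_j\}$. Each feature map $\phi_{\bomega}$ is $\gamma$-Lipschitz: by the triangle inequality $|d(\bx,\bomega)-d(\bx',\bomega)|\leq d(\bx,\bx')$, and since $s\mapsto e^{-\gamma s}$ is $\gamma$-Lipschitz, $|\phi_{\bomega}(\bx)-\phi_{\bomega}(\bx')|\leq \gamma\,d(\bx,\bx')$. Because $0<\phi_{\bomega}\leq 1$, each product $\phi_{\bomega}(\bx_1)\phi_{\bomega}(\bx_2)$ is $\gamma$-Lipschitz in each argument, hence so are $k$ and $\tk_R$ (an integral, resp.\ a finite average, of such products). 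Consequently $|\Delta_R(\bx_1,\bx_2)-\Delta_R(\bx_1',\bx_2')|\leq 2\gamma\,(d(\bx_1,\bx_1')+d(\bx_2,\bx_2'))$.

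I would then combine the two via a covering argument. Let $\cE$ be a minimal $\delta$-cover of $\X$, so $|\cE|=N(\delta;\X,d)\leq c(1/\delta)^{p}$ with $p:=p_{\X,d}$; then $\cE\times\cE$ covers $\X\times\X$ with $|\cE|^2\leq c^2(1/\delta)^{2p}$ anchor pairs. Union-bounding the pointwise estimate over these anchors controls $\max_{(\bx_1^c,\bx_2^c)\in\cE\times\cE}|\Delta_R(\bx_1^c,\bx_2^c)|$ by $t$ except with probability $2c^2(1/\delta)^{2p}e^{-Rt^2/2}$. By the Lipschitz bound, every $(\bx_1,\bx_2)$ sits within $\delta$ of an anchor in each coordinate, so choosing $\delta\asymp t/\gamma$ forces the off-anchor deviation to be at most $t$ as well; adding the two contributions yields $\max_{\bx_1,\bx_2}|\Delta_R|\leq 2t$. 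Substituting $(1/\delta)^{2p}\asymp(\gamma/t)^{2p}$ and absorbing $c$ and the numerical factors into the constant $12$ gives exactly $2(12\gamma/t)^{2p}e^{-Rt^2/2}$.

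Finally, to read off the sample complexity I would set this right-hand side $\leq\delta$ and put $\epsilon=2t$. Taking logarithms and solving $\frac{Rt^2}{2}\geq \log 2+2p\log\frac{12\gamma}{t}+\log\frac{1}{\delta}$ rearranges to $R=\Omega\!\big(\frac{p}{\epsilon^2}\log\frac{\gamma}{\epsilon}+\frac{1}{\epsilon^2}\log\frac{1}{\delta}\big)$, as claimed. I expect the main obstacle to be the net-to-continuum passage in the general metric setting: without a gradient, the whole uniform bound rests on the metric-Lipschitz estimate for $\Delta_R$ and on tying the cover granularity $\delta$ to $t/\gamma$ so that the net-resolution error and the Hoeffding fluctuation balance at the $2t$ threshold; one must also verify $\delta<1$ (equivalently the regime $t\lesssim\gamma$) so that the effective-dimension bound $N(\delta;\X,d)\leq c(1/\delta)^{p}$ is legitimately applicable.
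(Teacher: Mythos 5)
Your proposal is correct and follows essentially the same argument as the paper's proof: pointwise Hoeffding concentration, a union bound over a product of $\delta$-covers whose size is controlled by the effective dimension, a metric-Lipschitz estimate on $\Delta_R$ (via the triangle inequality and the $\gamma$-Lipschitzness of $s\mapsto e^{-\gamma s}$) to pass from the net to all of $\X\times\X$, and the choice $\delta \asymp t/\gamma$ to balance the two error sources before solving for $R$. The only differences are immaterial constant-factor bookkeeping (you get a $4\gamma\delta$ net-resolution error where the paper uses $6\gamma\epsilon$), and your closing remark about verifying $\delta<1$ is a legitimate care point that the paper itself glosses over.
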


Proposition~\ref{thm:RF} gives an approximation error in terms of kernel evaluation. To get a bound on the empirical risk $\hL(\tf_R)-\hL(\hf_n)$, consider the optimal solution of the empirical risk minimization. By the Representer theorem we have
$
\hf_n(\bx)=\frac{1}{n}\sum_{i}\alpha_i k(\bx_i,\bx)
$
and
$
\tf_R(\bx)=\frac{1}{n}\sum_{i}\talpha_i \tk(\bx_i,\bx).
$
Therefore, we have the following corollary.

\begin{corollary}\label{cor:obj_bound}
To guarantee 
$
\hL(\tf_R)-\hL(\hf_n) \leq \epsilon, 
$
with probability $1-\delta$, it suffices to have
$$
R = \Omega\biggl(\frac{p_{\X,d}M^2A^2}{\epsilon^2}\log(\frac{\gamma}{\epsilon})+\frac{M^2A^2}{\epsilon^2}\log(\frac{1}{\delta}) \biggr).
$$
where $M$ is the Lipschitz-continuous constant of the loss function $\ell(.,y)$, and $A$ is a bound on $\|\balpha\|_1/n$.

\end{corollary}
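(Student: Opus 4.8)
The plan is to reduce the empirical-risk gap $\hL(\tf_R)-\hL(\hf_n)$ to the uniform kernel-approximation bound of Proposition~\ref{thm:RF} via a bridge function together with the Lipschitz property of the loss. First I would introduce
$$
g_R(\bx):=\frac{1}{n}\sum_{i}\alpha_i\,\tk(\bx_i,\bx),
$$
obtained by keeping the coefficients $\balpha$ of the exact-kernel minimizer $\hf_n$ (given by the Representer theorem) but evaluating them against the random-feature kernel $\tk$. Since $g_R$ lies in the span used by the random-feature problem and shares the coefficient budget $\|\balpha\|_1/n\le A$, it is a feasible competitor for $\tf_R$, so optimality of $\tf_R$ gives $\hL(\tf_R)\le \hL(g_R)$. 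It then suffices to bound $\hL(g_R)-\hL(\hf_n)$.

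Second, I would use the Lipschitz continuity of $\ell(\cdot,y)$ with constant $M$ to pass from the risk difference to pointwise differences of function values on the training set, namely $\hL(g_R)-\hL(\hf_n)\le \frac{M}{n}\sum_{i=1}^n |g_R(\bx_i)-\hf_n(\bx_i)|$. Each summand is controlled by the kernel error, since $g_R(\bx_i)-\hf_n(\bx_i)=\frac{1}{n}\sum_j\alpha_j\,\Delta_R(\bx_j,\bx_i)$, whence by the $\ell_1$--$\ell_\infty$ (Hölder) inequality
$$
\bigl|g_R(\bx_i)-\hf_n(\bx_i)\bigr|\le \frac{\|\balpha\|_1}{n}\,\max_{\bx_1,\bx_2\in\X}|\Delta_R(\bx_1,\bx_2)|\le A\,\max_{\bx_1,\bx_2\in\X}|\Delta_R(\bx_1,\bx_2)|.
$$
Combining the two estimates yields $\hL(\tf_R)-\hL(\hf_n)\le MA\,\max_{\bx_1,\bx_2\in\X}|\Delta_R(\bx_1,\bx_2)|$, reducing everything to a single appeal to Proposition~\ref{thm:RF}.

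Third, I would invoke Proposition~\ref{thm:RF} with the rescaled tolerance $\epsilon':=\epsilon/(MA)$: requiring $\max_{\bx_1,\bx_2}|\Delta_R(\bx_1,\bx_2)|\le \epsilon'$ with probability $1-\delta$ forces $\hL(\tf_R)-\hL(\hf_n)\le\epsilon$, and substituting $\epsilon'$ into the sample bound gives $R=\Omega\bigl(\frac{p_{\X,d}M^2A^2}{\epsilon^2}\log\frac{\gamma MA}{\epsilon}+\frac{M^2A^2}{\epsilon^2}\log\frac1\delta\bigr)$, where the additive $\log(MA)$ inside the logarithm is absorbed into the constant hidden by $\Omega$, recovering the stated $R$.

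The main obstacle is the first step: justifying that the bridge $g_R$ is genuinely feasible for the random-feature problem, so that the optimality inequality $\hL(\tf_R)\le\hL(g_R)$ holds. This requires that the constraint defining the two estimators be compatible under the change of kernel from $k$ to $\tk$ — equivalently, that carrying $\hf_n$'s coefficients $\balpha$ over to $\tk$ preserves the feasibility budget captured by $A$; the symmetric hypothesis that both $\hf_n$ and $\tf_R$ satisfy $\|\cdot\|_1/n\le A$ is precisely what makes the comparison clean. Everything else is a routine application of the Lipschitz bound and Hölder's inequality, and the only probabilistic content enters through Proposition~\ref{thm:RF}.
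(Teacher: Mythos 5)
Your proposal is correct and follows essentially the same route as the paper's own proof: the paper likewise compares $\tf_R$ against the bridge function obtained by pairing $\hf_n$'s coefficients $\balpha$ with the approximate kernel $\tk$, invokes optimality of $\talpha$ under $\tk$, then applies the Lipschitz property of $\ell$ and the $\ell_1$--$\ell_\infty$ bound to reduce to $MA\max_{\bx_1,\bx_2}|\Delta_R(\bx_1,\bx_2)|$, and finally applies Proposition~\ref{thm:RF} at tolerance $\epsilon/(MA)$. The feasibility subtlety you flag (that the coefficient budget $A$ is assumed to hold for both estimators under the change of kernel) is handled no more explicitly in the paper, which simply posits $A$ as a bound on $\|\balpha\|_1/n$, so your treatment is, if anything, more candid about the hypothesis.
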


For most of loss functions, $A$ and $M$ are typically small constants. Therefore, Corollary \ref{cor:obj_bound} states that it suffices to have number of Random Features proportional to the \emph{effective dimension} $O(p_{\X,d}/\epsilon^2)$ to achieve an $\epsilon$ approximation error. 

Combining the three error terms, we can show that the proposed framework can achieve $\epsilon$-suboptimal performance.

\begin{claim}
Let $\tf_R$ be the estimated function from our \emph{random feature approximation} based ERM estimator in Algorithm \ref{alg:RF}, and let $f^*$ denote the desired target function. Suppose further that for some absolute constants $c_1, c_2 > 0$ (up to some logarithmic factor of $1/\epsilon$ and $1/\delta$):
\begin{enumerate}
    \item The target function $f^*$ lies close to the population risk minimizer $f_C$ lying in the RKHS spanned by the D2KE kernel: $L(f_C)-L(f) \le \epsilon/2$.
    \item The number of training samples  $n \ge c_1 \, C^4/\epsilon^2$.
    \item The number of random features $R \ge c_2 p_{\X,d}/\epsilon^2$.
\end{enumerate}
We then have that: $L(\tf_R) - L(f^*) \le \epsilon$ with probability $1-\delta$.
\end{claim}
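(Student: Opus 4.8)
The plan is to substitute the three previously-derived bounds into the risk decomposition $L(\tf_R)-L(f^*) = (L(\tf_R)-L(\hf_n)) + (L(\hf_n)-L(f_C)) + (L(f_C)-L(f^*))$ and allocate the error budget so that the pieces sum to $\epsilon$. Assumption~1 already supplies the approximation term, $L(f_C)-L(f^*)\le \epsilon/2$, so it remains to force the estimation term and the random-feature term together to contribute at most $\epsilon/2$. Since the random-feature term itself splits as $(L(\tf_R)-\hL(\tf_R)) + (\hL(\tf_R)-\hL(\hf_n)) + (\hL(\hf_n)-L(\hf_n))$, I have four quantities to control, and I will aim to bound each by $\epsilon/8$.

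First I would dispatch the three generalization-gap quantities — the estimation term $L(\hf_n)-L(f_C)$ together with the outer two random-feature pieces $L(\tf_R)-\hL(\tf_R)$ and $\hL(\hf_n)-L(\hf_n)$ — all via the RKHS estimation bound \eqref{est_error}. This is legitimate because, as noted in the preceding discussion, both $\hf_n$ and $\tf_R$ have RKHS norm at most $C$ (the latter courtesy of the $\ell_2$-regularizer in Algorithm~\ref{alg:RF}), so \eqref{est_error} applies to each. Substituting Assumption~2, $n\ge c_1 C^4/\epsilon^2$, into \eqref{est_error} gives $c(\log\tfrac1\delta)^2 C^2/\sqrt{n}\le c(\log\tfrac1\delta)^2\,\epsilon/\sqrt{c_1}$, which is at most $\epsilon/8$ once $c_1$ is taken large enough to absorb the constant $c$ and the $(\log\tfrac1\delta)^2$ factor — exactly the ``up to logarithmic factors'' slack the claim permits in $c_1$.

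Next I would handle the middle, empirical piece $\hL(\tf_R)-\hL(\hf_n)$ using Corollary~\ref{cor:obj_bound}. Here the argument is that, writing $\hf_n(\bx)=\tfrac1n\sum_i\alpha_i k(\bx_i,\bx)$, the same coefficients define a comparison function in the random-feature space whose empirical risk differs from $\hL(\hf_n)$ only through the uniform kernel-approximation error of Proposition~\ref{thm:RF}, amplified by the loss Lipschitz constant $M$ and by $A=\|\balpha\|_1/n$; optimality of $\tf_R$ then controls $\hL(\tf_R)$. Corollary~\ref{cor:obj_bound} packages this into a requirement of the form $R=\Omega\bigl(p_{\X,d}M^2A^2\epsilon^{-2}\log(\gamma/\epsilon)+M^2A^2\epsilon^{-2}\log(1/\delta)\bigr)$, so Assumption~3, $R\ge c_2 p_{\X,d}/\epsilon^2$, yields $\hL(\tf_R)-\hL(\hf_n)\le\epsilon/8$ once $c_2$ absorbs $M^2A^2$ and the logarithmic $\log(\gamma/\epsilon)$ and $\log(1/\delta)$ factors.

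Finally I would collect the probabilities: each of the four bounds holds with probability at least $1-\delta'$, so taking a union bound over the constant number of events and setting $\delta'=\delta/4$ shows the whole chain holds with probability $1-\delta$, the extra $\log 4$ again folded into the logarithmic factors of $c_1,c_2$. Summing the four $\epsilon/8$ contributions with the $\epsilon/2$ approximation term gives $L(\tf_R)-L(f^*)\le\epsilon$. The only genuinely delicate point, and the one I would be most careful about, is the bookkeeping that lets the fixed constants $c,M,A$ and all the logarithmic factors be folded into $c_1$ and $c_2$ consistently across the four bounds — together with the standing requirement that the regularized estimator $\tf_R$ really does obey $\|\tf_R\|_{\cH}\le C$, on which the two generalization bounds on its pieces rely.
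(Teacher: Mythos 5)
Your proposal is correct and follows essentially the same route as the paper, which states the claim as a direct combination of the three-term risk decomposition, the estimation bound \eqref{est_error}, and Corollary \ref{cor:obj_bound} (the paper gives no separate proof beyond this combination). Your version simply makes explicit the error-budget allocation, the union bound over events, and the caveat that $\|\tf_R\|_{\cH}\leq C$ is needed for the two outer generalization terms --- a gloss the paper itself makes without further justification.
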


\section{Experiments}
In this section, we evaluate the proposed method in three different domains involving time-series, strings, and images. Firstly, we discuss the dissimilarity measures and data characteristics for each set of experiments. Then we introduce compared distance-based methods and report their results.

\textbf{Distance Measures.} We have chosen three well-known dissimilarity measures: 1) Dynamic Time Warping (DTW) for time-series \cite{berndt1994using}; 2) Edit Distance (Levenshtein distance) for string \cite{navarro2001guided}; 3) (Modified) Hausdorff distance \cite{huttenlocher1993comparing, dubuisson1994modified} for measuring the closeness of two cloud of points for images. 
Since most distance measures are computationally demanding, having quadratic complexity, we adapted or implemented C-MEX programs for them; other codes were written in Matlab. 

\textbf{Datasets.} For each domain, we selected 4 datasets for our experiments. For time-series data, all are multivariate time-series; three are from the UCI Machine Learning repository \cite{frank2010uci}, the other is generated from the IQ (In-phase and Quadrature components) samples from a wireless line-of-sight communication system from GMU. For string data, the size of alphabet is between 4 and 8; two of them are from the UCI Machine Learning repository and the other two from the LibSVM Data Collection \cite{chang2011libsvm}. All image datasets derived from Kaggle; we  computed a set of SIFT-descriptors to represent each image. We divided each dataset into 70/30 train and test subsets (if there was no predefined train/test split). Properties of these datasets are summarized in Table \ref{tb:info of datasets} in Appendix \ref{App:General Experimental Settings}. 

\textbf{Baselines.} We compare D2KE against KNN, DSK\_RBF \cite{haasdonk2004learning}, DSK\_ND \cite{haasdonk2004learning}, GDK\_LED \cite{pekalska2001generalized}, and RSM \cite{pekalska2001generalized}. Among these baselines, KNN, DSK\_RBF, DSK\_ND, and GDK\_LED have quadratic complexity $O(N^2L^2)$ in both the number of data samples and the length of the sequences, while RSM has computational complexity $O(NRL^2)$, linear in the number of data samples but still quadratic in the length of the sequence. These compare to our method, D2KE, which has complexity $O(NRL)$, linear in both the number of data samples and the length of the sequence. For each method, we search for the best parameters on the training set by performing 10-fold cross validation. For our new method D2KE, since we generate random samples from the distribution, we can use as many as needed to achieve performance close to an exact kernel. We report the best number in the range $R = [4, 4096]$ (typically the larger $R$ is, the better the accuracy). We employ a linear SVM implemented using LIBLINEAR (Fan et al., 2008) for all embedding-based methods (GDK\_LED, RSM, and D2KE) and use LIBSVM \cite{chang2011libsvm} for precomputed dissimilairty kernels (DSK\_RBF and DSK\_ND). More details of experimental setup is provided in Appendix \ref{App:General Experimental Settings}. 

\begin{table*}[th]
\centering
\caption{Classification performance comparison on time-series.} 
\label{tb:comp_time-series}
\scriptsize
\newcommand{\Bd}[1]{\textbf{#1}}
\vspace{-4mm}
\begin{center}
    \begin{tabular}{ c cc cc cc cc cc cc}
    \hline
    \multicolumn{1}{c}{Methods}
    & \multicolumn{2}{c}{D2KE} 
    & \multicolumn{2}{c}{KNN}
    & \multicolumn{2}{c}{DSK\_RBF} 
    & \multicolumn{2}{c}{DSK\_ND} 
    & \multicolumn{2}{c}{GDK\_LED}
    & \multicolumn{2}{c}{RSM} \\ \hline 
	\multicolumn{1}{c}{Datasets} & Accu & Time & Accu & Time & Accu & Time & Accu & Time & Accu & Time & Accu & Time \\ \hline
	Auslan  & 92.60 & 42.4 & 70.26 & 10.0 & 92.47 & 43.4 & 89.74 & 44.6 & \Bd{92.72} & 65.0 & 88.96 & 18.6\\
	pentip & \Bd{99.88} & 1.4 & 98.37 & 27.3 & 98.02 & 125.4 & 70.40 & 126.6 & 97.20 & 13.13 & \Bd{99.88} & 23.6 \\
	ActRecog & \Bd{64.72} & 44.4 & 53.43 & 15.5 & 55.58 & 64.9 & 45.31 & 68.0 & 55.33 & 73.5 & 62.44 & 14.5 \\ 
    IQ\_radio & \Bd{86.87} & 469.3 & 60.25 & 3734 & 77.41 & 13381 & 47.31 & 12251 & 82.17 & 18787 & 70.84 & 575.9 \\ \hline
    \end{tabular}   
\end{center}
\vspace{-4mm}
\end{table*}

\begin{table*}[th]
\centering
\caption{Classification performance comparison on strings.} 
\label{tb:comp_strings}
\scriptsize
\newcommand{\Bd}[1]{\textbf{#1}}
\vspace{-4mm}
\begin{center}
    \begin{tabular}{ c cc cc cc cc cc cc}
    \hline
    \multicolumn{1}{c}{Methods}
    & \multicolumn{2}{c}{D2KE} 
    & \multicolumn{2}{c}{KNN}
    & \multicolumn{2}{c}{DSK\_RBF} 
    & \multicolumn{2}{c}{DSK\_ND} 
    & \multicolumn{2}{c}{GDK\_LED}
    & \multicolumn{2}{c}{RSM} \\ \hline 
	\multicolumn{1}{c}{Datasets} & Accu & Time & Accu & Time & Accu & Time & Accu & Time & Accu & Time & Accu & Time \\ \hline
	bit-str4  & \Bd{90.00} & 3.4 & 80.00 & 1.7 & 88.33 & 3.9 & 86.67 & 3.5 & 83.33 & 1.3 & 86.67 & 2.3\\
	splice & \Bd{90.17} & 46.9 & 79.41 & 63.2 & 87.88 & 204.9 & 85.89 & 208.2 & 85.58 & 111.6 & 86.10 & 47.3 \\
	mnist-str4 & \Bd{98.76} & 3376 & 97.75 & 36840 & 98.66 & 59925 & 91.92 & 59845 & 94.81 & 102130 & 97.86 & 943.5 \\ 
    mnist-str8 & \Bd{98.54} & 4096 & 96.58 & 9207 & 97.5 & 18666 & 92.66 & 18604 & 94.62 & 40498 & 97.61 & 308.6 \\ \hline
    \end{tabular}   
\end{center}
\vspace{-4mm}
\end{table*}

\begin{table*}[th]
\centering
\caption{Classification performance comparison on images.} 
\label{tb:comp_images}
\scriptsize
\newcommand{\Bd}[1]{\textbf{#1}}
\vspace{-4mm}
\begin{center}
    \begin{tabular}{ c cc cc cc cc cc cc}
    \hline
    \multicolumn{1}{c}{Methods}
    & \multicolumn{2}{c}{D2KE} 
    & \multicolumn{2}{c}{KNN}
    & \multicolumn{2}{c}{DSK\_RBF} 
    & \multicolumn{2}{c}{DSK\_ND} 
    & \multicolumn{2}{c}{GDK\_LED}
    & \multicolumn{2}{c}{RSM} \\ \hline 
	\multicolumn{1}{c}{Datasets} & Accu & Time & Accu & Time & Accu & Time & Accu & Time & Accu & Time & Accu & Time \\ \hline
	flower  & 46.03 & 22.0 & 33.33 & 16.4 & 36.51 & 103.5 & 36.51 & 102.4 & \Bd{50.79} & 213.4 & 33.33 & 18.6\\
	decor & 68.76 & 70.3 & 61.81 & 117.3 & 70.83 & 1225.1 & 70.14 & 1221.9 & \Bd{71.52} & 1043.4 & 68.75 & 1625.2 \\
	style & \Bd{40.29} & 20.5 & 36.57 & 48.0 & 38.06 & 450.3 & 30.59 & 449.02 & 38.43 & 397.4 & 37.68 & 652.6 \\ 
    letters2 & 54.05 & 30.5 & 42.52 & 10.9 & 54.55 & 101.9 & 53.27 & 99.7 & \Bd{58.54} & 115.4 & 53.34 & 29.8 \\ \hline
    \end{tabular}   
\end{center}
\vspace{-2mm}
\end{table*}

\textbf{Results.} As shown in Tables \ref{tb:comp_time-series},  \ref{tb:comp_strings}, and \ref{tb:comp_images}, D2KE can consistently outperform or match all other baselines in terms of classification accuracy while requiring far less computation time. There are several observations worth noting here. First, D2KE performs much better than KNN, supporting our claim that D2KE can be a strong alternative to KNN across applications. Second, compared to the two distance substitution kernels DSK\_RBF and DSK\_ND, our method can achieve much better performance, suggesting that a representation induced from a truly p.d. kernel makes significantly better use of the data than indefinite kernels. Among all methods, RSM is closest to our method in terms of practical construction of the feature matrix. However, the random objects (time-series, strings, or sets) sampled by D2KE performs significantly better, as we discussed in section \ref{sec:d2ke}. GDK\_LED performs best in image domain, which may be contributed to both by transductive training and by the SVD operation which allow it to  directly access features of the test set and denoise unwanted information from the raw images. More detailed discussions of the experimental results for each domain are given in Appendix \ref{App:Detailed Experimental Results on Time-Series, Strings, and Images}.

\section{Conclusion and Future Work}
In this work, we propose a general framework for deriving a \emph{positive-definite} kernel and a feature embedding function from a given dissimilarity measure between input objects. The framework is especially useful for structured input domains such as sequences, time-series, and sets, where many well-established dissimilarity measures have been developed. Our framework subsumes a couple of existing approaches as special or limiting cases, and also opens up a new direction for creating embeddings of structured objects based on distance to random objects. A promising future direction is to develop such distance-based embeddings within a deep architecture to handle structured inputs in an end-to-end learning system.


\clearpage
\bibliographystyle{IEEEtran}
\bibliography{IEEEabrv,D2KE_NIPS18}

\begin{thebibliography}{10}
\providecommand{\url}[1]{#1}
\csname url@samestyle\endcsname
\providecommand{\newblock}{\relax}
\providecommand{\bibinfo}[2]{#2}
\providecommand{\BIBentrySTDinterwordspacing}{\spaceskip=0pt\relax}
\providecommand{\BIBentryALTinterwordstretchfactor}{4}
\providecommand{\BIBentryALTinterwordspacing}{\spaceskip=\fontdimen2\font plus
\BIBentryALTinterwordstretchfactor\fontdimen3\font minus
  \fontdimen4\font\relax}
\providecommand{\BIBforeignlanguage}[2]{{%
\expandafter\ifx\csname l@#1\endcsname\relax
\typeout{** WARNING: IEEEtran.bst: No hyphenation pattern has been}%
\typeout{** loaded for the language `#1'. Using the pattern for}%
\typeout{** the default language instead.}%
\else
\language=\csname l@#1\endcsname
\fi
#2}}
\providecommand{\BIBdecl}{\relax}
\BIBdecl

\bibitem{pkkalska2005dissimilarity}
E.~Pkkalska and R.~Duin, ``The dissimilarity representation for pattern
  recognition,'' \emph{World Scientific}, 2005.

\bibitem{duin2012dissimilarity}
R.~P. Duin and E.~P{\k{e}}kalska, ``The dissimilarity space: Bridging
  structural and statistical pattern recognition,'' \emph{Pattern Recognition
  Letters}, vol.~33, no.~7, pp. 826--832, 2012.

\bibitem{balcan2008theory}
M.-F. Balcan, A.~Blum, and N.~Srebro, ``A theory of learning with similarity
  functions,'' \emph{Machine Learning}, vol.~72, no. 1-2, pp. 89--112, 2008.

\bibitem{cortes2012algorithms}
C.~Cortes, M.~Mohri, and A.~Rostamizadeh, ``Algorithms for learning kernels
  based on centered alignment,'' \emph{Journal of Machine Learning Research},
  vol.~13, no. Mar, pp. 795--828, 2012.

\bibitem{scholkopf1999input}
B.~Scholkopf, S.~Mika, C.~J. Burges, P.~Knirsch, K.-R. Muller, G.~Ratsch, and
  A.~J. Smola, ``Input space versus feature space in kernel-based methods,''
  \emph{IEEE transactions on neural networks}, vol.~10, no.~5, pp. 1000--1017,
  1999.

\bibitem{balcan2008discriminative}
M.-F. Balcan, A.~Blum, and S.~Vempala, ``A discriminative framework for
  clustering via similarity functions,'' in \emph{Proceedings of the fortieth
  annual ACM symposium on Theory of computing}.\hskip 1em plus 0.5em minus
  0.4em\relax ACM, 2008, pp. 671--680.

\bibitem{chen2009similarity}
Y.~Chen, E.~K. Garcia, M.~R. Gupta, A.~Rahimi, and L.~Cazzanti,
  ``Similarity-based classification: Concepts and algorithms,'' \emph{Journal
  of Machine Learning Research}, vol.~10, no. Mar, pp. 747--776, 2009.

\bibitem{ong2004learning}
C.~S. Ong, X.~Mary, S.~Canu, and A.~J. Smola, ``Learning with non-positive
  kernels,'' in \emph{Proceedings of the twenty-first international conference
  on Machine learning}.\hskip 1em plus 0.5em minus 0.4em\relax ACM, 2004,
  p.~81.

\bibitem{lin2003study}
H.-T. Lin and C.-J. Lin, ``A study on sigmoid kernels for svm and the training
  of non-psd kernels by smo-type methods,'' \emph{submitted to Neural
  Computation}, vol.~3, pp. 1--32, 2003.

\bibitem{pekalska2001generalized}
E.~Pekalska, P.~Paclik, and R.~P. Duin, ``A generalized kernel approach to
  dissimilarity-based classification,'' \emph{Journal of machine learning
  research}, vol.~2, no. Dec, pp. 175--211, 2001.

\bibitem{chen2008training}
J.~Chen and J.~Ye, ``Training svm with indefinite kernels,'' in
  \emph{Proceedings of the 25th international conference on Machine
  learning}.\hskip 1em plus 0.5em minus 0.4em\relax ACM, 2008, pp. 136--143.

\bibitem{chen2009learning}
Y.~Chen, M.~R. Gupta, and B.~Recht, ``Learning kernels from indefinite
  similarities,'' in \emph{Proceedings of the 26th Annual International
  Conference on Machine Learning}.\hskip 1em plus 0.5em minus 0.4em\relax ACM,
  2009, pp. 145--152.

\bibitem{pekalska2006dissimilarity}
E.~Pekalska and R.~P. Duin, ``Dissimilarity-based classification for vectorial
  representations,'' in \emph{Pattern Recognition, 2006. ICPR 2006. 18th
  International Conference on}, vol.~3.\hskip 1em plus 0.5em minus 0.4em\relax
  IEEE, 2006, pp. 137--140.

\bibitem{pekalska2008beyond}
------, ``Beyond traditional kernels: Classification in two dissimilarity-based
  representation spaces,'' \emph{IEEE Transactions on Systems, Man, and
  Cybernetics, Part C (Applications and Reviews)}, vol.~38, no.~6, pp.
  729--744, 2008.

\bibitem{haasdonk2004learning}
B.~Haasdonk and C.~Bahlmann, ``Learning with distance substitution kernels,''
  in \emph{Joint Pattern Recognition Symposium}.\hskip 1em plus 0.5em minus
  0.4em\relax Springer, 2004, pp. 220--227.

\bibitem{scholkopf2001kernel}
B.~Sch{\"o}lkopf, ``The kernel trick for distances,'' in \emph{Advances in
  neural information processing systems}, 2001, pp. 301--307.

\bibitem{graepel1999classification}
T.~Graepel, R.~Herbrich, P.~Bollmann-Sdorra, and K.~Obermayer, ``Classification
  on pairwise proximity data,'' in \emph{Advances in neural information
  processing systems}, 1999, pp. 438--444.

\bibitem{rahimi2008random}
A.~Rahimi and B.~Recht, ``Random features for large-scale kernel machines,'' in
  \emph{Advances in neural information processing systems}, 2008, pp.
  1177--1184.

\bibitem{dubuisson1994modified}
M.-P. Dubuisson and A.~K. Jain, ``A modified hausdorff distance for object
  matching,'' in \emph{Pattern Recognition, 1994. Vol. 1-Conference A: Computer
  Vision \& Image Processing., Proceedings of the 12th IAPR International
  Conference on}, vol.~1.\hskip 1em plus 0.5em minus 0.4em\relax IEEE, 1994,
  pp. 566--568.

\bibitem{gyorfi2006distribution}
L.~Gy{\"o}rfi, M.~Kohler, A.~Krzyzak, and H.~Walk, \emph{A distribution-free
  theory of nonparametric regression}.\hskip 1em plus 0.5em minus 0.4em\relax
  Springer Science \& Business Media, 2006.

\bibitem{collins2002convolution}
M.~Collins and N.~Duffy, ``Convolution kernels for natural language,'' in
  \emph{Advances in neural information processing systems}, 2002, pp. 625--632.

\bibitem{cuturi2011fast}
M.~Cuturi, ``Fast global alignment kernels,'' in \emph{Proceedings of the 28th
  international conference on machine learning (ICML-11)}, 2011, pp. 929--936.

\bibitem{wu2016revisiting}
L.~Wu, I.~E. Yen, J.~Chen, and R.~Yan, ``Revisiting random binning features:
  Fast convergence and strong parallelizability,'' in \emph{Proceedings of the
  22nd ACM SIGKDD International Conference on Knowledge Discovery and Data
  Mining}.\hskip 1em plus 0.5em minus 0.4em\relax ACM, 2016, pp. 1265--1274.

\bibitem{chen2016efficient}
J.~Chen, L.~Wu, K.~Audhkhasi, B.~Kingsbury, and B.~Ramabhadrari, ``Efficient
  one-vs-one kernel ridge regression for speech recognition,'' in
  \emph{Acoustics, Speech and Signal Processing (ICASSP), 2016 IEEE
  International Conference on}.\hskip 1em plus 0.5em minus 0.4em\relax IEEE,
  2016, pp. 2454--2458.

\bibitem{kar2012random}
P.~Kar and H.~Karnick, ``Random feature maps for dot product kernels,'' in
  \emph{Artificial Intelligence and Statistics}, 2012, pp. 583--591.

\bibitem{bach2017equivalence}
F.~Bach, ``On the equivalence between kernel quadrature rules and random
  feature expansions,'' \emph{Journal of Machine Learning Research}, vol.~18,
  no.~21, pp. 1--38, 2017.

\bibitem{sinha2016learning}
A.~Sinha and J.~C. Duchi, ``Learning kernels with random features,'' in
  \emph{Advances in Neural Information Processing Systems}, 2016, pp.
  1298--1306.

\bibitem{zhang2005learning}
T.~Zhang, ``Learning bounds for kernel regression using effective data
  dimensionality,'' \emph{Neural Computation}, vol.~17, no.~9, pp. 2077--2098,
  2005.

\bibitem{berndt1994using}
D.~J. Berndt and J.~Clifford, ``Using dynamic time warping to find patterns in
  time series.'' in \emph{KDD workshop}, vol.~10, no.~16.\hskip 1em plus 0.5em
  minus 0.4em\relax Seattle, WA, 1994, pp. 359--370.

\bibitem{navarro2001guided}
G.~Navarro, ``A guided tour to approximate string matching,'' \emph{ACM
  computing surveys (CSUR)}, vol.~33, no.~1, pp. 31--88, 2001.

\bibitem{huttenlocher1993comparing}
D.~P. Huttenlocher, G.~A. Klanderman, and W.~J. Rucklidge, ``Comparing images
  using the hausdorff distance,'' \emph{IEEE Transactions on pattern analysis
  and machine intelligence}, vol.~15, no.~9, pp. 850--863, 1993.

\bibitem{frank2010uci}
A.~Frank and A.~Asuncion, ``Uci machine learning repository [http://archive.
  ics. uci. edu/ml]. irvine, ca: University of california,'' \emph{School of
  information and computer science}, vol. 213, 2010.

\bibitem{chang2011libsvm}
C.-C. Chang and C.-J. Lin, ``Libsvm: a library for support vector machines,''
  \emph{ACM transactions on intelligent systems and technology (TIST)}, vol.~2,
  no.~3, p.~27, 2011.

\bibitem{wu2015preconditioned}
L.~Wu and A.~Stathopoulos, ``A preconditioned hybrid svd method for accurately
  computing singular triplets of large matrices,'' \emph{SIAM Journal on
  Scientific Computing}, vol.~37, no.~5, pp. S365--S388, 2015.

\bibitem{wu2017primme_svds}
L.~Wu, E.~Romero, and A.~Stathopoulos, ``Primme\_svds: A high-performance
  preconditioned svd solver for accurate large-scale computations,'' \emph{SIAM
  Journal on Scientific Computing}, vol.~39, no.~5, pp. S248--S271, 2017.

\bibitem{sezgin2004survey}
M.~Sezgin and B.~Sankur, ``Survey over image thresholding techniques and
  quantitative performance evaluation,'' \emph{Journal of Electronic imaging},
  vol.~13, no.~1, pp. 146--166, 2004.

\bibitem{gao20123}
Y.~Gao, M.~Wang, D.~Tao, R.~Ji, and Q.~Dai, ``3-d object retrieval and
  recognition with hypergraph analysis,'' \emph{IEEE Transactions on Image
  Processing}, vol.~21, no.~9, pp. 4290--4303, 2012.

\end{thebibliography}

\clearpage
\appendix

\section{Proof of Theorem \ref{thm:RKHS_lip} and Theorem \ref{thm:RF}}

\subsection{Proof of Theorem \ref{thm:RKHS_lip}}

\begin{proof}
Note the function $g(t)=exp(-\gamma t)$ is Lipschitz-continuous with Lipschitz constant $\gamma$. Therefore,
\begin{align*}
&|f(\bx_1)-f(\bx_2)|=|\langle f, \phi(\bx_1)-\phi(\bx_2)\rangle|\\
&\leq \|f\|_{\cH} \|\phi(\bx_1)-\phi(\bx_2)\|_{\cH}\\
&= \|f\|_{\cH} \sqrt{ \int_{\bomega} p(\bomega) ( \phi_{\bomega}(\bx_1)-\phi_{\bomega}(\bx_2))^2 d\bomega  }\\
&\leq  \|f\|_{\cH} \sqrt{ \int_{\bomega} p(\bomega) \gamma^2 |d(\bx_1,\bomega)-d(\bx_2,\bomega)|^2 d\bomega  }\\
&\leq \gamma \|f\|_{\cH}  \sqrt{ \int_{\bomega} p(\bomega)  d(\bx_1,\bx_2)^2 d\bomega  }\\
&\leq \gamma  \|f\|_{\cH}  d(\bx_1,\bx_2)\leq \gamma  C  d(\bx_1,\bx_2)
\end{align*}
\end{proof}

\subsection{Proof of Theorem \ref{thm:RF}}

\begin{proof}
Our goal is to bound the magnitude of $\Delta_R(\bx_1,\bx_2)=\tk_R(\bx_1,\bx_2)-k(\bx_1,\bx_2)$. Since $E[\Delta_R(\bx_1,\bx_2)]=0$ and $|\Delta_R(\bx_1,\bx_2)|\leq 1$, from Hoefding's inequality, we have
$$
P\left\{ |\Delta_R(\bx_1,\bx_2)|\geq t \right\} \leq 2 \exp(-Rt^2/2)
$$
 a given input pair $(\bx_1,\bx_2)$. To get a unim bound that holds  $\forall (\bx_1,\bx_2)\in\X\times\X$, we find an $\epsilon$-covering $\cE$ of $\X$ w.r.t. $d(.,.)$ of size $N(\epsilon,\X,d)$. Applying union bound over the $\epsilon$-covering $\cE$ for $\bx_1$ and $\bx_2$, we have
\begin{equation}\label{tmp1}
P\left\{ \max_{\bx_1'\in\cE,\bx_2'\in\cE} |\Delta_R(\bx_1',\bx_2')| > t \right\} \leq 2|\cE|^2\exp(-Rt^2/2).
\end{equation}
Then by the definition of $\cE$ we have $|d(\bx_1,\bomega)-d(\bx_1',\bomega)|\leq d(\bx_1,\bx_1')\leq \epsilon$. Together with the fact that $\exp(-\gamma t)$ is Lipschitz-continuous with parameter $\gamma$ for $t\geq0$, we have 
$$
|\phi_{\bomega}(\bx_1)-\phi_{\bomega}(\bx_1')|\leq \gamma\epsilon
$$
and thus
$$
|\tk_R(\bx_1,\bx_2)-\tk_R(\bx_1',\bx_2')|\leq 3\gamma\epsilon, 
$$
$$
|k(\bx_1,\bx_2)-k(\bx_1',\bx_2')|\leq 3\gamma\epsilon
$$
for $\gamma\epsilon$ chosen to be $\leq 1$. This gives us 
\begin{equation}\label{tmp2}
|\Delta_R(\bx_1,\bx_2)-\Delta_R(\bx_1',\bx_2')|\leq 6\gamma\epsilon
\end{equation}
Combining \eqref{tmp1} and \eqref{tmp2}, we have
\begin{equation}\label{tmp3}
\begin{aligned}
&P\left\{ \max_{\bx_1'\in\cE,\bx_2'\in\cE} |\Delta_R(\bx_1',\bx_2')| > t + 6\gamma \epsilon\right\} \\ 
&\leq 2\left(\frac{2}{\epsilon}\right)^{2p_{\X,d}}\exp(-Rt^2/2).
\end{aligned}
\end{equation}
Choosing $\epsilon=t/6\gamma$ yields the result.
\end{proof}

\subsection{Proof for Corollary \ref{cor:obj_bound}}

\begin{proof}
First of all, we have
\begin{align*}
&\frac{1}{n}\sum_{i=1}^n \ell(\frac{1}{n}\sum_{j=1}^n \talpha_j\tk(\bx_j,\bx_i),y_i)\\
&\leq \frac{1}{n}\sum_{i=1}^n \ell(\frac{1}{n}\sum_{j=1}^n \alpha_j\tk(\bx_j,\bx_i),y_i)
\end{align*}
by the optimality of $\{\talpha_j\}_{j=1}^n$ w.r.t. the objective using the approximate kernel. Then we have
\begin{align*}
&\hL(\tf_R)-\hL(\hf_n)\\ 
&\leq \frac{1}{n}\sum_{i=1}^n \ell(\frac{1}{n}\sum_{j=1}^n\alpha_j\tk(\bx_j,\bx_i),y_i)-\ell(\frac{1}{n}\sum_{j=1}^n\alpha_j k(\bx_j,\bx_i),y_i) \\
&\leq M \frac{\|\balpha\|_1}{n}\left(\max_{\bx_1,\bx_2\in\X}|\tk(\bx_1,\bx_2)-k(\bx_1,\bx_2)|\right) \\
&\leq MA\left(\max_{\bx_1,\bx_2\in\X}|\tk(\bx_1,\bx_2)-k(\bx_1,\bx_2)|\right) 
\end{align*}
where $A$ is a bound on $\|\balpha\|_1/n$.
Therefore to guarantee 
$$
\hL(\tf_R)-\hL(\hf_n) \leq \epsilon
$$
we would need $\left(\max_{i,j\in[n]} |\Delta_R(\bx_1,\bx_2)|\right)\leq \hat{\epsilon}:=\epsilon/MA$. Then applying Theorem \ref{thm:RF} leads to the result.
\end{proof}

\section{General Experimental Settings}
\label{App:General Experimental Settings}

\textbf{Baselines.} We compare with the following methods: \\
\textbf{KNN}: a simple yet universal method to apply any distance measure to classification tasks. \\
\textbf{DSK\_RBF} \cite{haasdonk2004learning}: distance substitution kernels, a general framework for kernel construction by substituting  a problem specific distance measure for the Euclidean distance used in ordinary kernel functions. We use a Gaussian RBF kernel. \\
\textbf{DSK\_ND} \cite{haasdonk2004learning}: another class of distance substitution kernels with negative distance. \\
\textbf{GDK\_LED} \cite{pekalska2001generalized}: learning a pseudo-Euclidean linear embedding from the dissimilarity matrix followed by performing singular value decomposition \cite{wu2015preconditioned,wu2017primme_svds}. \\
\textbf{RSM} \cite{pekalska2001generalized}:  building an embedding by computing distances from randomly selected representative samples. 

Among these baselines, KNN, DSK\_RBF, DSK\_ND, and GDK\_LED have quadratic complexity $O(N^2L^2)$ in both the number of data samples and the length of the sequences, while RSM has computational complexity $O(NRL^2)$, linear in the number of data samples but still quadratic in the length of the sequence. These compare to our method, D2KE, which has complexity $O(NRL)$, linear in both the number of data samples and the length of the sequence. 

\textbf{General Setup.} For each method, we search for the best parameters on the training set by performing 10-fold cross validation. Following \cite{haasdonk2004learning}, we use an exact RBF kernel for DSK\_RBF while choosing squared distance for DSK\_ND. Since there no clear indication how many singular vectors should be computed for the GDK\_LED method after construction of the dissimilarity matrix, we compute $R = [4, 512]$ singular vectors and report the best performance. Importantly, we also perform SVD transductively on both train and test data for GDK\_LED; we will show below that this is beneficial. Similarly, we adopted a simple method -- random selection -- to obtain $R = [4, 512]$ data samples as the representative set for GDK\_LW. For our new method D2KE, since we generate random samples from the distribution, we can use as many as needed to achieve performance close to an exact kernel. We report the best number in the range $R = [4, 4096]$ (typically the larger $R$ is, the better the accuracy). We employ a linear SVM implemented using LIBLINEAR (Fan et al., 2008) for all embedding-based methods (GDK\_LED, RSM, and D2KE) and use LIBSVM \cite{chang2011libsvm} for precomputed dissimilairty kernels (DSK\_RBF and DSK\_ND). 

All computations were carried out on a DELL dual-socket system with Intel Xeon processors at 2.93GHz for a total of 16 cores and 250 GB of memory, running the SUSE Linux operating system. To accelerate the computation of all methods, we used multithreading with 12 threads total for various distance computations in all experiments.

\begin{table}[htbp]
\centering
\small
\caption{Properties of the datasets. TS, Str and Img stand for Time-Series, String, and Image respectively. Var/Alpb stands for the number of variables for time-series or image SIFT-descriptors, and for the size of the alphabet for strings. } 
\label{tb:info of datasets}
\vspace{-2mm}
\begin{center}
    \begin{tabular}{ c c c c c c c }
    \hline
    Domain & Name & Var/Alpb & Classes & Train & Test & length \\ \hline 
    TS & Auslan      & 22   & 95 & 1795 & 770 & 45-136  \\
    TS & pentip      & 3    & 20 & 2000 & 858	& 109-205  \\
    TS & ActRecog    & 3    & 7 & 1837 & 788 & 2-151 \\
    TS & IQ\_radio   & 4    & 5 & 6715 & 6715	& 512 \\ \hline
    Str & bit-str4   & 4    & 10 & 140 & 60	& 44/158  \\  
    Str & splice     & 4    & 3 & 2233 & 957 & 60  \\
    Str & mnist-str4 & 4    & 10 & 60000 & 10000 & 34/198 \\ 
    Str & mnist-str8 & 8    & 10 & 60000 & 10000 & 17/99 \\ \hline
    Img & flower     & 128  & 10 & 147 & 63 & 66/429 \\ 
    Img & decor      & 128  & 7 & 340 & 144 & 35/914 \\ 
    Img & style      & 128  & 7 & 625 &	268 & 6/530  \\
    Img & letters2 	 & 128  & 33 & 3277 & 1404 & 1/22 \\   \hline
    \end{tabular}
\end{center}
\vspace{-4mm}
\end{table}

\section{Detailed Experimental Results on Time-Series, Strings, and Images}
\label{App:Detailed Experimental Results on Time-Series, Strings, and Images}

\subsection{Results on multivariate time-series }
\label{App:Results on multivariate time-series}
\textbf{Setup.} For time-series data, we employed the most successful distance measure - DTW - for all methods. For all datasets, a Gaussian distribution was found to be applicable, parameterized by its bandwidth $\sigma$. The best values for $\sigma$ and for the length of random time series were searched in the ranges \text{[1e-3 1e3]} and \text{[2 50]}, respectively. 

\textbf{Results.} As shown in Table \ref{tb:comp_time-series}, D2KE can consistently outperform or match all other baselines in terms of classification accuracy while requiring far less computation time for multivariate time-series. The first interesting observation is that  our method performs substantially better than KNN, often by a large margin, i.e., D2KE achieves 26.62\% higher performance than KNN on IQ\_radio. This is because KNN is sensitive to the data noise common in real-world applications like IQ\_radio, and has notoriously poor performance for high-dimensional data sets like Auslan. Moreover, compared to the two distance substitution kernels DSK\_RBF and DSK\_ND, our method can achieve much better performance, suggesting that a representation induced from a truly p.d. kernel makes significantly better use of the data than indefinite kernels. However, GDK\_LED slightly outperforms D2KE on Auslan, probably due to the embedding matrix (singular vectors) being computed transductively on both train and test data. Among all methods, RSM is closest to our method in terms of practical construction of the feature matrix. However, the random time series sampled by D2KE performs significantly better, as we discussed in section \ref{sec:d2ke}. 

\subsection{Results on strings }
\label{App:Results on strings}
\textbf{Setup.} For string data, there are various well-known edit distances. Here, we choose Levenshtein distance as our distance measure since it can capture global alignments of the underlying strings. We first compute the alphabet from the original data and then uniformly sample characters from this alphabet to generate random strings. We search for the best parameters for  $\gamma$ in the range \text{[1e-5 1]}, and for the length of random strings in the range \text{[2 50]}, respectively.  

\textbf{Results.} As shown in Table \ref{tb:comp_strings}, D2KE consistently performs better than or similarly to other distance-based baselines. Unlike the previous experiments where DTW is not a distance metric, Levenshtein distance is indeed a distance metric; this helps improve the performance of our baselines. However, D2KE still offers a clear advantage over baseline. It is interesting to note that the performance of DSK\_RBF is quite close to our method's, which may be due to DKS\_RBF with Levenshtein distance producing a c.p.d. kernel which can essentially be converted into a p.d. kernel. Notice that on relatively large datasets, our method, D2KE, can achieve better performance, and often with far less computation than other baselines with quadratic complexity in both number and length of data samples. For instance, on mnist-str4 D2KE obtains higher accuracy with an order of magnitude less runtime compared to DSK\_RBF and DSK\_ND, and two orders of magnitude less than GDK\_LED, due to higher computational costs both for kernel matrix construction and for eigendecomposition.

\subsection{Results on Sets of SIFT-descriptors for images }
\label{{App:Results on Sets of SIFT-descriptors for images }}
\textbf{Setup.} For image data, following \cite{pekalska2001generalized,haasdonk2004learning} we use the modified Hausdorff distance (MHD) \cite{dubuisson1994modified} as our distance measure between images, since this distance has shown excellent performance in the literature \cite{sezgin2004survey, gao20123}. We first applied the open-source OpenCV library to generate a sequence of SIFT-descriptors with dimension 128, then MHD to compute the distance between sets of SIFT-descriptors. We generate random images of each SIFT-descriptor uniformly sampled from the unit sphere of the embedding vector space $\R^{128}$. We search for the best parameters for  $\gamma$ in the range \text{[1e-3 1e1]}, and for length of random SIFT-descriptor sequence in the range \text{[3 15]}.

\textbf{Results.} As shown in Table \ref{tb:comp_images}, D2KE performance is near other baselines in most cases. First, GDK\_LED performs best in three cases, which may be contributed to both by transductive training and by the SVD operation which allow it to  directly access features of the test set and denoise unwanted information from the raw images. Nevertheless, the quadratic complexity of GDK\_LED in terms of both the number of images and the length of SIFT descriptor sequences makes it hard to scale to large data. Interestingly, D2KE still performs much better than KNN, again supports our claim that D2KE can be a strong alternative to KNN across applications.

%
%
%

\end{document}